\documentclass[runningheads,a4paper]{llncs}

\usepackage{amssymb}
\usepackage{graphicx}

\usepackage{url}

\usepackage{graphicx}
\usepackage{amsmath} 
\usepackage{mathtools}
\usepackage{color}
\usepackage{bm}
\usepackage{array}
\usepackage{hyperref}
\usepackage{float} 
\usepackage[percent]{overpic}

\newcommand{\figtablewidth}{0.23cm}

\renewcommand{\div}[1]{\operatorname{div} \left ( #1 \right )}
\newcommand{\tr}[1]{\mathrm{tr} \left ( #1 \right )}
\renewcommand{\det}[1]{\mathrm{det} \left ( #1 \right )}

\newcommand{\eg}[0]{\emph{e.g. }}
\newcommand{\ie}[0]{\emph{i.e. }}
\newcommand{\dive}{\,\mathrm{div}}
\newcommand{\figwidth}{0.23}

\begin{document}

\mainmatter  

\title{A Tensor Variational Formulation of \\ Gradient Energy Total Variation}

\titlerunning{Gradient Energy Total Variation}

\authorrunning{F. {\AA}str{\"o}m et al.} 

\author{Freddie {\AA}str{\"o}m\inst{1,2}, George Baravdish\inst{3} and Michael Felsberg\inst{1}} 

\institute{Computer Vision Laboratory, Link{\"o}ping University, Sweden \and  
 Center for Medical Image Science and Visualization (CMIV), Link{\"o}ping University \and
 Department of Science and Technology, Link{\"o}ping University, Sweden
}

\toctitle{}
\tocauthor{}
\maketitle

\begin{abstract}
	We present a novel variational approach to a tensor-based total variation formulation which is called \emph{gradient energy total variation}, GETV. We introduce the gradient energy tensor~\cite{diva2:269100} into the GETV and show that the corresponding Euler-Lagrange (E-L) equation is a tensor-based partial differential equation of total variation type. Furthermore, we give a proof which shows that GETV is a convex functional. This approach, in contrast to the commonly used \emph{structure tensor}, enables a formal derivation of the corresponding E-L equation. Experimental results suggest that GETV compares favourably to other state of the art variational denoising methods such as \emph{extended anisotropic diffusion} (EAD)~\cite{diva2:543914} and \emph{total variation} (TV)~\cite{Rudin1992259} for gray-scale and colour images.
\end{abstract}

\section{Introduction}
The variational approach to image diffusion is to model an energy functional $E(u) = F(u) + \lambda R(u)$ where $F(u)$ is a fidelity term. The positive constant $\lambda$ determines the influence of $R(u)$, the regularization term describing smoothness constraints on the solution $u^*$ that minimizes $E(u)$. In this work we are interested in tensor-based formulations of the regularization term and we introduce the functional \emph{gradient energy total variation} (GETV). 

A basic approach to remove additive image noise is to convolve the image data with a low-pass filter \eg a Gaussian kernel. The approach has the advantage that noise is eliminated, but so is image structure. To tackle this drawback, Perona and Malik~\cite{Perona90scale-spaceand} introduced an edge stopping function to limit the filtering where the image gradient takes on large values. A tensor-based extension of the Perona and Malik formulation was presented by Weickert~\cite{Weickert96anisotropicdiffusion} in the mid 90s which we refer to, in accordance to Weickert, as anisotropic diffusion.

The principle of anisotropic diffusion is that smoothing of the image is performed parallel to image structure. The concept is based on the structure tensor~\cite{diva2:274026,Forstner87}, a windowed second moment matrix, which describes the local orientation in terms of a tensor field. To smooth the image parallel to the image structure, the tensor field is transformed by using a non-linear diffusivity function. The transformed tensor field is then used in the diffusion scheme and the resulting tensor is commonly denoted as the diffusion tensor.
\begin{figure}[tb]
    \centering
	\includegraphics[width=1\textwidth]{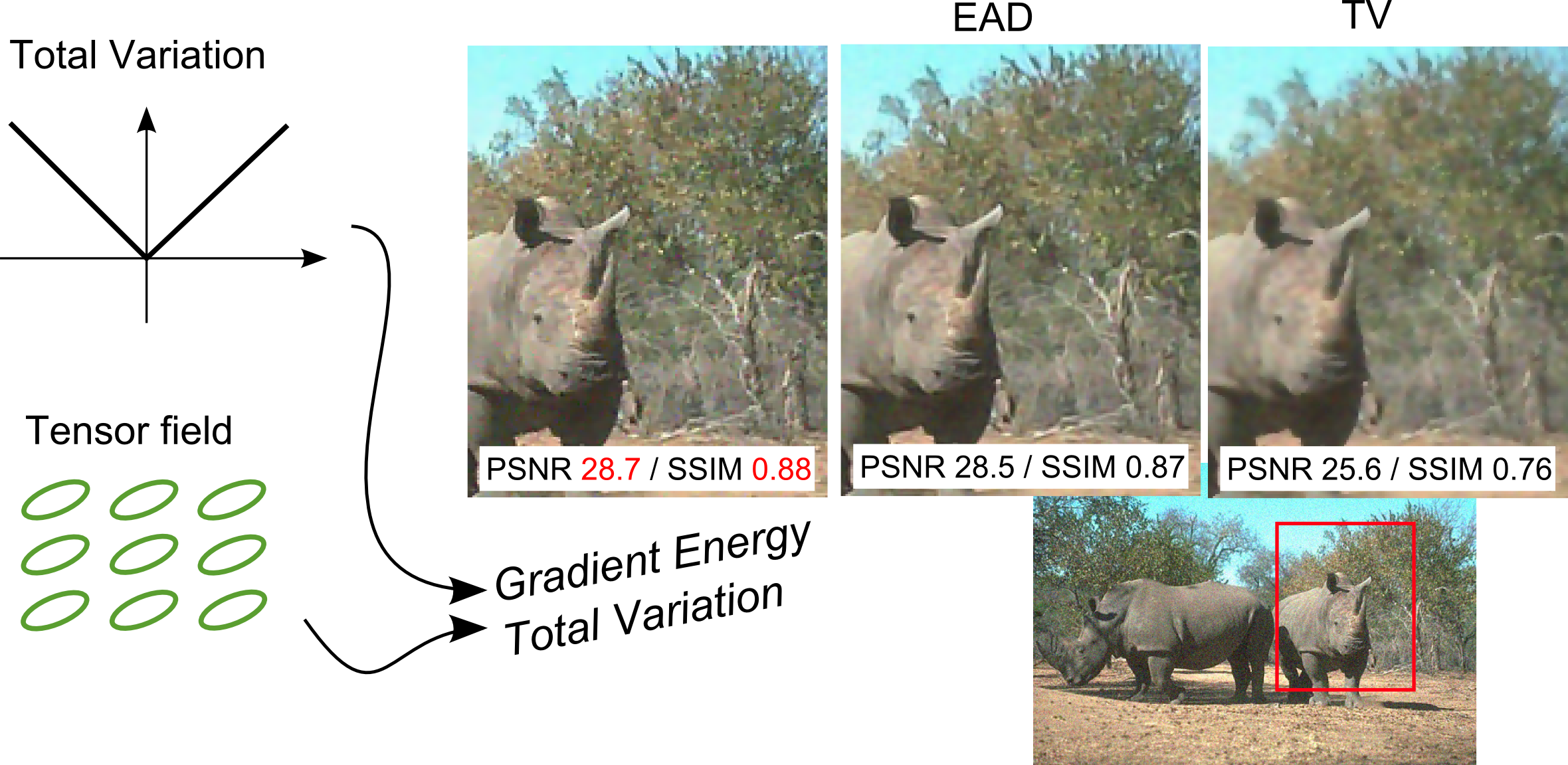}
    \caption{In this work we derive a gradient energy tensor-total variation (GETV) scheme. In this example, our approach clearly boosts the visual impression, PSNR and SSIM performance over EAD and TV in a colour image \emph{auto-denoising} application.}
    \label{fig:overview}
\end{figure}
In this paper we propose to replace the structure tensor and introduce the \emph{gradient energy tensor} (GET)~\cite{diva2:269100} into the regularization term of a total variation energy functional. The formulation allows us to consider \emph{both} the eigenvalues and eigenvectors, in contrast to previous work which only considers the eigenvalues of the structure tensor. We formulate a gradient energy total variation functional and show significant improvements over current variational state-of-the-art denoising methods. In figure~\ref{fig:overview} we illustrate the denoising result obtained by the proposed gradient energy total variation formula compared to extended anisotropic diffusion (EAD)~\cite{diva2:543914} and total variation (TV)~\cite{Rudin1992259}, note that our approach obtain higher error measures and sharper edges.

\subsection{Related works}
The linear diffusion scheme (convolution with a Gaussian kernel) is the solution of a partial differential equation (PDE) and it is closely related to the notion of scale-space~\cite{Iijim62}. Therefore it has been of interest to investigate also the Perona and Malik formulation and its successors of adaptive image filtering in terms of a variational framework.

In the linear diffusion scheme the regularization terms are given by $R(u) = \int_\Omega |\nabla u|^2 \; dx $ and in total variation (TV) $R(u) = \int_\Omega |\nabla u| \; dx$, see~\cite{Rudin1992259}. The total variation formulation is of particular interest since it has a tendency to enforce piecewise smooth surfaces, however it is also the drawback since it produces cartoon-like images.

Several works have investigated generalizations of the standard total variation approach to tensor-based formulations. Roussous and Maragos~\cite{Roussos2010}, Lefkimmiatis et al.~\cite{Lefkimmiatis2013} and Grasmair and Lenzen~\cite{Grasmair210}, all consider the \emph{structure tensor} and define objective functions in terms of the tensor eigenvalues. The difference from those work compared to our presentation is that our formulation allows us to consider both eigenvalues and eigenvectors of the \emph{gradient energy tensor}. 

Roussous and Maragos~\cite{Roussos2010} considered a functional which indirectly takes the eigenvalues of the structure tensor into account and ignores the eigenvectors. They considered the regularization $R(u) = \int_\Omega \psi(\mu_1,\mu_2) \; dx$, where $\mu_{1,2}$ are the eigenvalues of the structure tensor. They remark that standard variational calculus tools are not applicable to derive the Euler-Lagrange equation. The problem arise when computing the structure tensor where a smooth kernel is convolved with the image gradients. Furthermore, Lefkimmiatis et al. which, similar to Roussous and Maragos, considered Schatten-norm of the structure tensor eigenvalues. Grasmair and Lenzen~\cite{Grasmair210} defined $R(u) = \int_\Omega \sqrt{\nabla^tu A(u) \nabla u} \; dx$, where $A(u)$ is the structure tensor with remapped eigenvalues. The objective function is then solved using a finite element method instead of deriving a variational solution. Krajsek and Scharr~\cite{Krajsek2010}, linearized the diffusion tensor thus they obtained a linear anisotropic regularization term resulting in an approximate formulation of a tensor-valued functional for image diffusion. 

The common formulation by the aforementioned works is that they use the structure tensor which does not allow for an explicit formal derivation of the Euler-Lagrange equation. Our framework does.

\subsection{Contributions}
The approach we take in this work is to introduce the \emph{gradient energy tensor} (GET)~\cite{diva2:269100} into the regularization term of the proposed functional. We give a proof which shows that the GETV is a convex functional. Our formulation allows us to differentiate \emph{both} the eigenvalues and eigenvectors since the GET does \emph{not} (in this work) contain a post-convolution of its components. The following major contributions are presented
\begin{itemize}
	\item We present a novel objective function \emph{gradient energy total variation} which models a tensor-based total variation diffusion scheme by using the gradient energy tensor in section~\ref{sec:introducing_GET}.
	\item In section~\ref{sec:experiments}, we show that the new scheme combines EAD~\cite{diva2:543914} and TV~\cite{Rudin1992259} achieving highly competitive results for grey and colour image denoising. 
\end{itemize}

\section{Variational approach to image enhancement}
\label{sec:functionals}

\subsection{Energy minimization}
\label{sec:DIFFUSION}
The variational framework of image diffusion is based on functionals of the form
\begin{equation} \label{eq:E_diffusion}
	E(u) =  \int_\Omega (u - u^0)^2 \; d {x} + \lambda R(u),
\end{equation}
where ${x} = (x_1, x_2)^t \in \Omega \subset \mathbb{R}^2$, $\Omega$ is the image size in pixels and $u^{0}$ is the observed noisy image. The first term in~\eqref{eq:E_diffusion} is the fidelity term $F(u)$ and the second term is the regularization term, $R(u)$. The constant $\lambda > 0$ determines the amount of regularization. The stationary point that minimizes $E(u)$ is given by the Euler-Lagrange (E-L) equation
\begin{equation}
	\lim_{\varepsilon \rightarrow 0} \frac{\partial E (u + \varepsilon v)}{\partial \varepsilon}  = 0, \quad \mbox{in} \quad  \Omega,
  \label{eq:EL_min}
\end{equation}
where $v$ is a test-function. The corresponding boundary condition to~\eqref{eq:EL_min} is a homogeneous Neumann condition \eg $\nabla u \cdot {n} = 0$ where ${n}$ is the normal vector on the boundary $\partial \Omega$ and $\nabla = ( \partial_{x_1}, \partial_{x_2} )^t$ is the gradient operator. The E-L equation for total variation~\cite{Rudin1992259} is
\begin{equation}
  \left \{
    \begin{array}{rlll}
      u-u^{0} - \lambda \div{\dfrac{\nabla u}{|\nabla u|}} & = 0 & \mbox{in} & \Omega \\
      \nabla u \cdot {n} & = 0 & \mbox{on} &  \partial \Omega , \\
    \end{array} \right .
  \label{eq:EL_tv}
\end{equation}
and the $u$ which minimizes \eqref{eq:EL_tv} can be obtained by solving a parabolic initial value problem (IVP) to get the diffusion equation. Alternatively, TV is often solved by using primal-dual formulations~\cite{Chambolle_PD_TV2001} or by modifying its norm to include a constant offset to avoid discontinuous solutions.

\subsection{Tensor-based anisotropic diffusion}
\label{sec:tensor_diffusion}
In order to filter parallel to the image structure, a tensor-based anisotropic diffusion scheme was introduced by Weickert~\cite{Weickert96anisotropicdiffusion}. The filtering scheme is defined as the partial differential equation (PDE) 
\begin{equation} \label{eq:EL_AD}
	\div{D(T) \nabla u} = 0,\quad \mbox{in} \quad  \Omega.
\end{equation}
The adaptivity of the filter is determined by the structure tensor 
\begin{equation} \label{eq:structure_tensor}
	T = w*(\nabla u \nabla^t u),
\end{equation}
where $*$ is a convolution operator and $w$ is a Gaussian kernel~\cite{Forstner87,diva2:274026}. The tensor is a windowed second moment matrix, thus it estimates the local variance and can be thought of as describing a covariance matrix~\cite{Lindeberg96_direct_computation}. The eigenvector of $T$ corresponding to the largest eigenvalue is aligned orthogonal to the image structure. Therefore, to avoid blurring of image structures, the diffusion tensor $D$ is computed as $D(T) = U^T g(\Lambda) U \enspace$, where $U$ is the eigenvectors and $\Lambda$ the eigenvalues of $T$~\cite{Fberg2011}. We require $g(r)\rightarrow 1$ as $r\rightarrow 0$ and $g(r) \rightarrow 0$ as $r \rightarrow \infty$ and a common choice is the negative exponential function $g(r) = \exp(-r/k)$ where $k$ is an unknown edge-stopping parameter.

\section{Gradient energy tensor}
The gradient energy tensor is a real-valued and symmetric tensor and it determines the directional energy distribution of the signal gradient~\cite{diva2:269100}. In contrast to the structure tensor~\eqref{eq:structure_tensor} it does \emph{not} require a post-convolution of the tensor-components to form a rank 2 tensor. Note that due to the convolution operator, the structure tensor is not sensitive to structures smaller than the width of the averaging filter used to compute it.  The classical GET is defined in terms of the image data in $u$~\cite{diva2:269100}. Here we use an alternative, but equivalent, formulation of GET expressed in the image gradient. Let $H = \nabla \nabla^t$ be the Hessian and $\nabla\Delta u = \nabla \nabla^t \nabla u$, then we define the GET as
\begin{equation}\label{eq:GET}
	GET(\nabla u) = H uH u - \frac{1}{2}(\nabla u [\nabla \Delta u]^t + [\nabla \Delta u]\nabla^t u).
\end{equation} 
The presence of second and third-order derivatives in GET makes it sensitive to noise, however, it allows us to capture orientation of structures that are not possible to detect with the structure tensor. In general the GET is not positive semi-definite. An investigation of the positivity of the 1-dimensional energy operator was done in~\cite{275632}. In the two-dimensional case, the positivity of the operator is reflected in the sign of the eigenvalues. Let the components of the GET be $a$, $b$ and $c$ \ie $GET \coloneqq \begin{pmatrix} a & b \\ b & c \end{pmatrix}$, 
then the GET is positive semi-definite if the condition in Lemma~\ref{lemma:1} is satisfied.
\begin{lemma}\label{lemma:1}
	The GET is positive semi-definite if $\tr{HuHu} - \nabla^t u \nabla \Delta u \geq \sqrt{l}$
	where $l = \tr{GET}^2 - 4\det{GET} \geq 0 $.
\end{lemma}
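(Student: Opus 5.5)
The plan is to reduce positive semi-definiteness of the symmetric $2\times 2$ matrix $GET=\begin{pmatrix} a & b\\ b & c\end{pmatrix}$ to a statement about its smaller eigenvalue, and then to identify the trace of $GET$ with the left-hand side of the condition in Lemma~\ref{lemma:1}.

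First, since $GET$ is real and symmetric, its eigenvalues are real. Solving the characteristic polynomial $\mu^2-\tr{GET}\,\mu+\det{GET}=0$ gives
\begin{equation*}
\mu_{1,2}=\tfrac12\left(\tr{GET}\pm\sqrt{l}\right),\qquad l=\tr{GET}^2-4\det{GET}=(a-c)^2+4b^2\geq 0 ,
\end{equation*}
so the hypothesis $l\geq 0$ holds automatically. The matrix is positive semi-definite if and only if the smaller eigenvalue satisfies $\mu_2\geq 0$, which is equivalent to $\tr{GET}\geq\sqrt{l}$.

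Second, I compute $\tr{GET}$ directly from the definition \eqref{eq:GET}. Trace is linear, so
\begin{equation*}
\tr{GET}=\tr{HuHu}-\tfrac12\left(\tr{\nabla u[\nabla\Delta u]^t}+\tr{[\nabla\Delta u]\nabla^t u}\right).
\end{equation*}
For vectors $p,q$ one has $\tr{pq^t}=q^tp$. Hence both traces of the outer products equal $\nabla^t u\,\nabla\Delta u$, and
\begin{equation*}
\tr{GET}=\tr{HuHu}-\nabla^t u\,\nabla\Delta u .
\end{equation*}
Substituting this into $\tr{GET}\geq\sqrt{l}$ gives exactly the stated condition, so $\mu_2\geq 0$. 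Together with $\mu_1\geq\mu_2$, both eigenvalues are nonnegative, and the spectral theorem gives $x^tGETx\geq 0$ for all $x$.

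No step is a real obstacle. The only point needing care is the trace identity for the symmetrized outer-product term, in particular using the symmetry of $Hu$ so that $\nabla\Delta u=Hu\,\nabla u$ is read correctly. It is also worth noting that the condition is in fact necessary as well as sufficient.
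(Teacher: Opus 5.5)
Your proof is correct and follows the paper's argument: real eigenvalues $\tfrac12\bigl(\tr{GET}\pm\sqrt{l}\bigr)$, so positive semi-definiteness holds if and only if $\tr{GET}\ge\sqrt{l}$, with $l=(a-c)^2+4b^2\ge 0$. You also usefully make explicit the trace identity $\tr{GET}=\tr{HuHu}-\nabla^t u\,\nabla\Delta u$ that the paper uses silently; one small aside is that $\nabla\Delta u=\nabla\nabla^t\nabla u$ is the gradient of the Laplacian (third-order derivatives), not the product $Hu\,\nabla u$, but since $\tr{pq^t}=q^tp$ holds for arbitrary vectors this misreading does not affect your argument.
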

\begin{proof}
	 Since GET is symmetric it has real eigenvalues. Thus by its eigenvalue decomposition it is sufficient to show that $\tr{GET} \geq \sqrt{l}$ in order for GET to be positive semi-definite. $l$ is necessarily positive since $l = (a-c)^2 + 4b^2 \geq 0$.
\end{proof}
Since $GET$ is not necessarily positive semi-definite we define $GET^+$, in the below definition, which is a positive semi-definite tensor.
\begin{definition}
	The positive semi-definite tensor $GET^+$ is 
	\begin{equation}\label{eq:get+}
		GET^+(\nabla u) = V^t \begin{pmatrix} |\iota_1| & 0 \\ 0 & |\iota_2| \end{pmatrix} V
	\end{equation}
	where $V$ is the eigenvectors and $\iota_{1,2}$ are eigenvalues of $GET$.
\end{definition}

\section{Introducing Gradient Energy Total Variation}
\label{sec:introducing_GET}
In this section we introduce the proposed energy functional which results in the gradient energy tensor-based total variation scheme. The regularization term we consider is given in the following definition
\begin{definition}\label{def:GETV}
	The gradient energy total variation functional (GETV) is 
	\begin{equation} \label{eq:R_GETV}
		R(u) = \int_\Omega \nabla^t u S(\nabla u) \nabla u \; d{x},
	\end{equation} 
	where $S(\nabla u) \in \mathbb{R}^{2\times 2}$ is a symmetric positive semi-definite tensor. 
\end{definition}

\subsection{Variational formulation of gradient energy total variation}
In this section we will study properties and interpretation of the GETV before deriving its corresponding Euler-Lagrange equation in the next section. 

We begin our analysis by putting $S(\nabla u) \in \mathbb{R}^{2\times 2}$ to be the symmetric positive semi-definite tensor in~\eqref{eq:R_GETV} with eigenvalues $\mu_{1,2}$ and orthonormal eigenvectors $v, w$ then  
\begin{equation*}
	S(\nabla u) = vv^t\mu_1 + ww^t\mu_2.
\end{equation*} 
Furthermore, we define a tensor $W(\nabla u) \in \mathbb{R}^{2\times 2}$ which also is symmetric positive semi-definite with corresponding eigenvectors to $S(\nabla u)$ \ie
\begin{equation*}
	W(\nabla u) = vv^t\lambda_1 + ww^t\lambda_2,
\end{equation*}
and $\lambda_{1,2}$ are the eigenvalues. In particular we will consider $W(\nabla u)$ of the form
\begin{equation}\label{eq:S}
	W(\nabla u) = |\nabla u|S(\nabla u),
\end{equation}
such that~\eqref{eq:R_GETV} is convex (see Corollary~\ref{cor:phi}). 

Start by expressing the quadratic form defined in $S(\nabla u)$ by its eigendecomposition and rearranging the resulting vectors such that
\begin{align}
\notag	 \nabla^t u S(\nabla u) \nabla u 
	& = \nabla^t u [ \mu_1 vv^t + \mu_2 ww^t ] \nabla u  \\
\label{eq:phi_orient}	& = \mu_1 v^t [\nabla u \nabla^t u] v + \mu_2 w^t [\nabla u \nabla^t u] w ,
\end{align} 
The product $\nabla u\nabla^t u$ is a rank-1 tensor with orthonormal eigenvectors $p=(p_1,p_2)^t$ and $p^\perp = (p_2, -p_1)^t$ such that $P = (p, p^\perp)$ and $\Lambda$ has the corresponding eigenvalues $\kappa_1$ and $\kappa_2$ on its diagonal. Note that, by the spectral theorem, the eigendecomposition of $\nabla u\nabla^t u$ is always well-defined, \ie the eigenvector $p$ is not singular. This is shown by the generalized definition of $p$, \ie in the case of $|\nabla u| \neq 0$ then $p=\nabla u/|\nabla u|$, and in the case of $|\nabla u| = 0$ we let $P = I$ where $I$ is the identity matrix

In the following we substitute the eigendecomposition of $\nabla u \nabla^t u = P^t\Lambda P$ into~\eqref{eq:phi_orient}:
\begin{align}
\notag	\nabla^t u S(\nabla u) \nabla u  
\notag	&=  \left ( \mu_1 v^tP \Lambda P^t v + \mu_2 w^tP \Lambda P^t w \right ) \\
\label{eq:quad_nabla_eig}	&=  \left ( \mu_1 v^tP \begin{pmatrix} \kappa_1 & 0 \\ 0 & \kappa_2 \end{pmatrix} P^t v + \mu_2 w^tP \begin{pmatrix} \kappa_1 & 0 \\ 0 & \kappa_2 \end{pmatrix} P^t w \right ) 
\end{align}
and insert $\kappa_1 = |\nabla u|^2$ and $\kappa_2 = 0$ and use the eigenvalue relation from~\eqref{eq:S} \ie $\lambda_1 = \mu_1|\nabla u|$ and $\lambda_2 = \mu_2|\nabla u|$. Then after rewriting~\eqref{eq:quad_nabla_eig} we obtain 
\begin{equation}\label{eq:phi_mu}
	 \nabla^t u S(\nabla u) \nabla u   
	= \lambda_1 |\nabla u| (v\cdot p)^2 + \lambda_2|\nabla u|(w\cdot p)^2. 
\end{equation}
The interpretation of~\eqref{eq:phi_mu} is that $v,p,w$ are normalized eigenvectors such that the scalar products defines the rotation of $W(\nabla u)$ and $S(\nabla u)$ in relation to the image gradients direction. This can be illustrated by using the definition of the scalar product \ie $v \cdot p = \cos(\theta)$ and $w \cdot p = \sin(\theta)$, where $\theta$ is the rotation angle as shown in figure~\ref{fig:eigenvectors} a. Note that, if $W(\nabla u)$ describes the local directional information its eigenvectors will be parallel to the orthonormal eigenvectors of $\nabla u\nabla^t u$, \ie $v\,||\,p$ and $w\,||\,p^\perp$ if $\theta = 0$. 

\begin{figure}[t]
	\centering
	\begin{overpic}[width=0.40\textwidth]{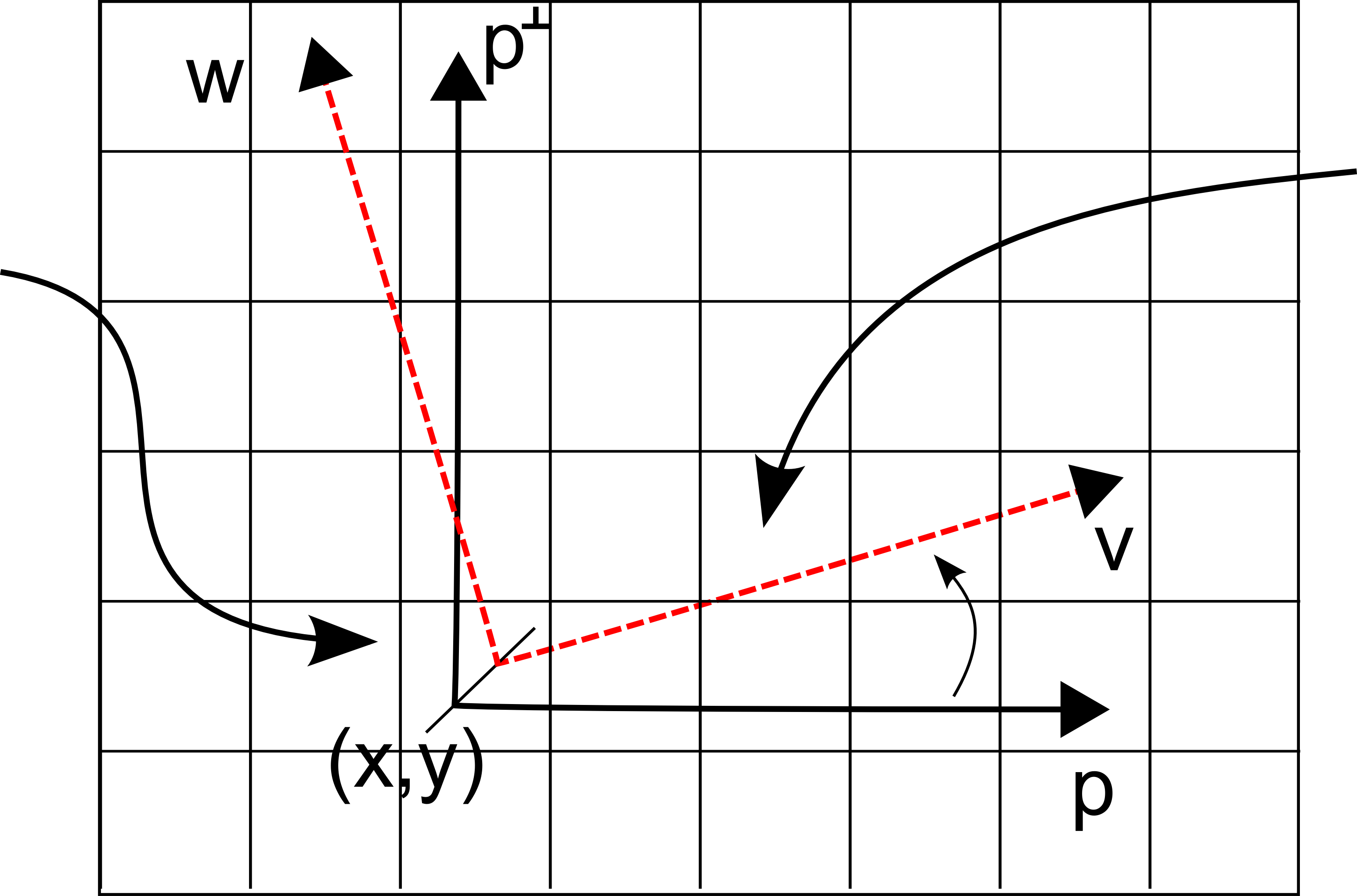}
		\put (-10,50) {$\nabla u \nabla^t u$}
		\put (75,19) {$\theta$}
		\put (65,58) {$W(\nabla u), S(\nabla u)$}
	\end{overpic}
	\hspace{10mm}
	\begin{overpic}[width=0.36\textwidth]{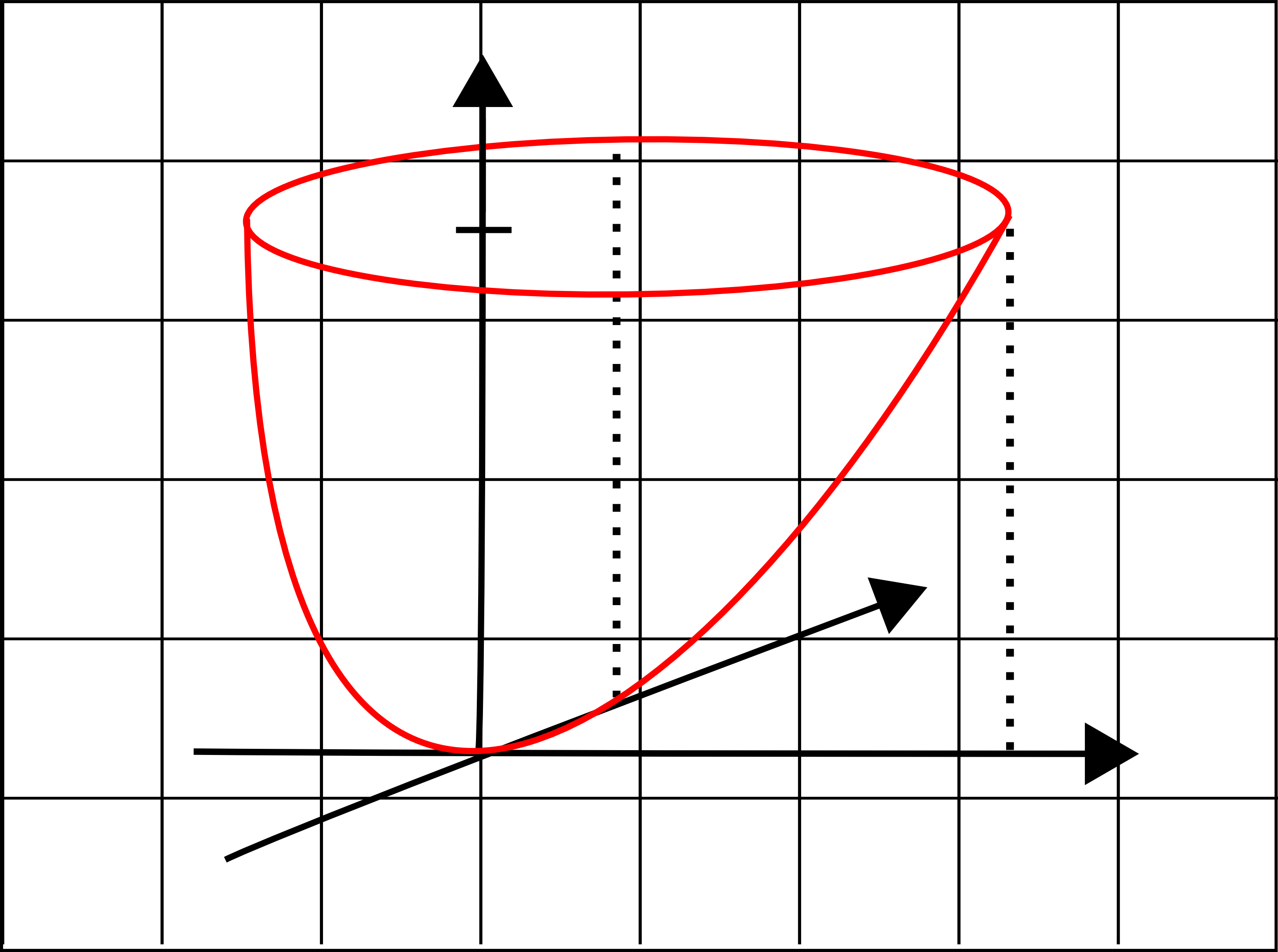}
		\put (42,67) {$\Phi(\nabla u)$}
		\put (42,55) {$c$}
		\put (35,25) {$\sqrt{\frac{\tau_2}{c}}$}
		\put (71,25) {$q_2$}
		\put (68,6) {$\sqrt{\frac{\tau_1}{c}}$}
		\put (88,10) {$q_1$}
	\end{overpic} \\
	\hfill  (a) \hfill \hfill (b) \hfill \phantom{I} \\
	\caption{(a) Illustration of eigenvector basis at coordinate $(x,y)$, the dashed (red) arrows indicate the eigenvectors of $W(\nabla u)$ and $S(\nabla u)$ and thick (black) arrows $\nabla u\nabla^t u$. (b) Illustration of the paraboloid~\eqref{eq:paraboloid} where we set $c = \tau_1q_1^2 + \tau_2q_2^2$. }
	\label{fig:eigenvectors}
\end{figure}

In the following we set $W(\nabla u)$ in~\eqref{eq:S} as
\begin{equation}\label{eq:W}
		W(\nabla u) = \exp(-GET^+(\nabla u)/k),
\end{equation}
where $\exp$ is the matrix exponential function such that $\lambda_i = \exp(-|\iota_i|/k)$ for $i = 1,2$ and $k>0$, the eigenvalues $\iota_i$ were defined in~\eqref{eq:get+}. In the below Corollary we put $W(\nabla u)$ as~\eqref{eq:W} and show that $\Phi(\nabla u)$ is convex and thereby $R(u)$ is convex in $u$.
\begin{corollary}\label{thm:S}
	 The GETV functional, $R(u)$, is convex w.r.t. $u$. 
\end{corollary}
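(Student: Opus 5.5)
The plan follows the reduction already set up in~\eqref{eq:phi_mu} and Figure~\ref{fig:eigenvectors}(b). The strategy is to treat $W(\nabla u)$, and hence its eigenvalues $\lambda_{1,2}$ and eigenvectors $v,w$, as frozen coefficients at each point. Convexity of the integrand $\Phi(\nabla u)=\nabla^t u S(\nabla u)\nabla u$ in the variable $q=\nabla u$ then gives convexity of $R(u)$, because $u\mapsto\nabla u$ is linear and integrating a pointwise convex integrand preserves convexity.

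\textbf{Step 1: rewrite the integrand.} Using $W=|\nabla u|S$ and the decomposition~\eqref{eq:phi_mu}, I write $\Phi(q)$ as
\[
\Phi(q)=\lambda_1 |q|(v\cdot p)^2+\lambda_2|q|(w\cdot p)^2, \qquad p=q/|q|.
\]
Writing $q_1=v\cdot q$ and $q_2=w\cdot q$ in the orthonormal frame $(v,w)$ of $W$, this becomes
\[
\Phi(q)=\frac{\lambda_1 q_1^2+\lambda_2 q_2^2}{|q|},
\]
with the generalized convention $P=I$ at $q=0$, where I set $\Phi(0)=0$.

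\textbf{Step 2: pass to the paraboloid.} Setting $\tau_i=\lambda_i/|q|$ recovers the paraboloid~\eqref{eq:paraboloid}, $\Phi=\tau_1q_1^2+\tau_2q_2^2$, as in Figure~\ref{fig:eigenvectors}(b). The parameters satisfy $\lambda_i=\exp(-|\iota_i|/k)\in(0,1]$ by~\eqref{eq:W}, so $\tau_i\ge 0$ and the quadratic form is positive semi-definite. This is the route I expect the paper's Corollary~\ref{cor:phi} to take.

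\textbf{Step 3: verify convexity directly.} To check convexity honestly rather than only through the picture, I compute the Hessian of $f(q)=q^tAq/|q|$ with $A=\mathrm{diag}(\lambda_1,\lambda_2)\succeq 0$ held fixed. Two facts do the work. First, $f$ is positively $1$-homogeneous. Second, for a $1$-homogeneous function, convexity is equivalent to convexity of its restriction to the unit circle direction, i.e.\ nonnegativity of the second derivative in the direction $q^\perp$. Parametrizing $q=r(\cos\theta,\sin\theta)$ gives $f=r(\lambda_1\cos^2\theta+\lambda_2\sin^2\theta)=r\,g(\theta)$. Convexity then reduces to the condition $g+g''\ge 0$, which I expect to be the key computation. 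Here
\[
g+g''=\lambda_1\cos^2\theta+\lambda_2\sin^2\theta-2(\lambda_1-\lambda_2)\cos 2\theta .
\]

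\textbf{Main obstacle.} This quantity is not automatically nonnegative when $\lambda_1$ and $\lambda_2$ differ strongly. The crux is therefore to control the anisotropy ratio $\lambda_1/\lambda_2$. The first thing I would try is to exploit the bounds $\lambda_i\in(0,1]$ together with the freezing of $W$. If this fails, I would fall back on the weaker statement that is actually justified: $\Phi$ is convex in $q$ for fixed $\tau_i\ge0$, via the paraboloid. I would then argue convexity of $R$ in the lagged-coefficient sense, in which $W$ is evaluated at the previous iterate, exactly as in the frozen-tensor interpretation used for anisotropic diffusion~\eqref{eq:EL_AD}.
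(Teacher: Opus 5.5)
Your Steps 1--2 reproduce the paper's own proof. The paper writes $\Phi=\tau_1q_1^2+\tau_2q_2^2$ with $\tau_i=|\nabla u|\lambda_i\ge 0$ and $q=Vp$. This is the same function as your $(\lambda_1q_1^2+\lambda_2q_2^2)/|q|$, only normalized so that $q$ is a unit vector. It then infers convexity of $R$ from the positive curvature of this paraboloid and the continuity of $u\mapsto\Phi$.

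Your Step 3 correctly shows why that is not a proof. The coefficients $\tau_i$ and the frame $(v,w)$ move with $u$. In the paper's normalization the paraboloid is only ever sampled on the unit circle. Composing with a merely continuous map, rather than an affine one, gives no convexity. Your formula $g+g''=\tfrac{\lambda_1+\lambda_2}{2}-\tfrac32(\lambda_1-\lambda_2)\cos 2\theta$ is right. So even with $W$ frozen, the integrand $q\mapsto q^tAq/|q|$ is convex iff $\max_i\lambda_i\le 2\min_i\lambda_i$.

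Your first repair cannot work. The bound $\lambda_i=\exp(-|\iota_i|/k)\in(0,1]$ puts no bound on $\lambda_1/\lambda_2$, and near edges the scheme is built to have $\lambda_1\ll\lambda_2$. Your fallback changes the statement. Lagging only $W$ leaves exactly the nonconvex integrand you just analysed. Lagging $|\nabla u|$ as well (your fixed $\tau_i$) gives convexity of a quadratic lagged-diffusivity surrogate, not of $R$.

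So there is a genuine gap, and it cannot be closed: with $W=\exp(-GET^+(\nabla u)/k)$ depending on $u$, the corollary is false. Take $\bar u(x)=\tfrac12|x|^2$ and $u_s=s\bar u$ for $s\ge0$. Then $\nabla u_s=sx$, $Hu_s=sI$ and $\nabla\Delta u_s=0$. Hence $GET^+(\nabla u_s)=s^2I$, $W=e^{-s^2/k}I$, and
\[
R(u_s)=C\,s\,e^{-s^2/k},\qquad C=\int_\Omega|x|\,dx>0 .
\]
Convexity together with $R(u_0)=0$ would force $R(u_s)\ge sR(u_1)\to\infty$ as $s\to\infty$, whereas here $R(u_s)\to0$. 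Explicitly, for $s_*=\sqrt{k/2}$ one has $R(u_{s_*})=Cs_*e^{-1/2}>Cs_*e^{-2}=\tfrac12\bigl(R(u_0)+R(u_{2s_*})\bigr)$.

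What can honestly be proved is only the conditional, frozen-coefficient statement your Step 3 identifies. The paper's argument stops at your Step 2 and misses exactly this obstruction.
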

\begin{proof}
To prove the convexity of $R(u)$ we write $\Phi(u) = \nabla^t u S(\nabla u) \nabla u$ in terms of the eigenvectors and eigenvalues of $W(\nabla u)$. Then, from \eqref{eq:phi_mu} it follows that 
\begin{align}
\notag	\Phi(\nabla u) & = |\nabla u|(\lambda_1 p^t (vv^t) p + \lambda_2 p^t (ww^t) p ) \\
\notag				   &=  |\nabla u|p^t(\lambda_1 vv^t + \lambda_2 ww^t)p \\
					   &=   (Vp)^t \begin{pmatrix} \tau_1 & 0 \\ 0 & \tau_2 \end{pmatrix} V p ,
\end{align}
where $V = (v, w)$ and $\tau_i(\nabla u) = |\nabla u|\lambda_i \geq 0$ for $i = 1,2$. Let $q = Vp = (q_1, q_2)^t$, then
\begin{equation}\label{eq:paraboloid}
	\Phi(\nabla u) = \tau_1 q_1^2 + \tau_2 q_2^2,
\end{equation}
is a quadratic form in the basis of orthonormal eigenvectors $V$ and $\tau_i(\nabla u)$. This quadratic form is always well-defined due to the spectral theorem. Since the paraboloid~\eqref{eq:paraboloid} has positive curvature everywhere, and $u$ maps continuously to the paraboloid, $R(u)$ is convex in $u$ which concludes the proof. \qed
\end{proof}
We illustrate $\Phi(\nabla u)$ by the paraboloid in figure~\ref{fig:eigenvectors} b.

\begin{remark}\label{cor:phi}
	 $\Phi(\nabla u)$ can also be expressed in the Schatten-1 norm \cite{Horn:1985:MA:5509} (pp. 441) \ie $||A||_1 = \sum_i^n |\sigma_i(A)| $ where $\sigma_i(A)$ denotes the $i$'th singular value of a tensor $A$. Since $||A||_1$ is a norm it has the important properties of positivity and convexity. However, in our case it is not obvious that the convexity follow directly from the norm due to the non-linearity of $W(\nabla u)$, see Corollary~\ref{thm:S}. From \eqref{eq:phi_mu} we have that
	\begin{equation}\label{eq:S_schatten}
		\nabla^t u S(\nabla u) \nabla u = ||A(\nabla u)||_1.  
	\end{equation}
	This means that $A(\nabla u)$ has singular values $\sigma_1(A) = \lambda_1 |\nabla u| (v\cdot p)^2 $ and $\sigma_2(A) = \lambda_2|\nabla u|(w\cdot p)^2$ where $\lambda_{1,2}$ are the eigenvalues of $W(\nabla u)$ in~\eqref{eq:S}. 
\end{remark}

\begin{remark}
	The standard total variation formulation is obtained from~\eqref{eq:S} by setting $W(\nabla u)$ as the identity matrix $I$, then we have  
	$
		\Phi(\nabla u)  
		= \nabla^t u \frac{I}{|\nabla u|} \nabla u 
		= \frac{|\nabla u|^2}{|\nabla u|} 
		= |\nabla u|
	$.	
	Notice that we can derive the same result from~\eqref{eq:S_schatten}. Suppose that $W(\nabla u) = I$, then $\lambda_{1,2}=1$ and $\sigma_1(A) = |\nabla u|$ and $\sigma_2(A) = 0$, thus we obtain that 
	$
		 \Phi(\nabla u) = ||A(\nabla u)||_1 = |\nabla u|.
	$
\end{remark}

\subsection{A formal minimizer of gradient energy total variation}
In the previous section we defined the GETV in definition~\ref{def:GETV} by putting $S(\nabla u)$ according to~\eqref{eq:S} with $W(\nabla u)$ as in~\eqref{eq:W}. In order to minimize our proposed functional we use a result from~\cite{diva2:543914} which derived the corresponding Euler-Lagrange equation for a functional with a quadratic form. Therefore we use this result to directly minimize~\eqref{eq:Rfunctional} in the below Theorem~\ref{thm:R} in order to compute the Euler-Lagrange equation of~\eqref{eq:R_GETV}. Note that Theorem~\ref{thm:R} is restated from~\cite{diva2:543914} but with the difference that the tensor $S$ is symmetric.   

\begin{theorem}\label{thm:R}
Let the regularization term $R(u)$ in the functional~\eqref{eq:E_diffusion}, be given by
\begin{equation}
	R(u) = \int_{\Omega} \nabla^t u S(\nabla u) \nabla u \; d{x},
    \label{eq:Rfunctional}
\end{equation}
where $u \in \mathcal{C}^2$ and $S(\nabla u) \in \mathbb{R}^{2\times 2}$ is a tensor-valued function $\mathbb{R}^2 \rightarrow \mathbb{R}^{2\times2}$. Set ${\bm s} = \nabla u$ and define
\begin{equation} \label{eq:T_s}
S_{\bm s}({\bm s}) =
 \begin{pmatrix} 
    \nabla^t u S_{s_1}({\bm s})  \\
    \nabla^t u S_{s_2}({\bm s}) 
  \end{pmatrix} .
\end{equation}
where $S_{s_1}$ is defined as the component-wise differentiation of $S$ with respect to $s_1$. Then the corresponding E-L equation is given by
\begin{equation}
   \left \{
    \begin{array}{rl}
     \div{ B \nabla u}  &  = 0 \mbox{ in }  \Omega \\[3mm]
      {n} \cdot B\nabla u & =  0     \mbox{ on }  \partial \Omega, 
     \end{array} \right . 
   \label{eq:ELexp_simplified_2}
 \end{equation}
 where $B = [2S + S_{\bm s} ]\big |_{\bm{s}=\nabla u}$.
\end{theorem}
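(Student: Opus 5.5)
The plan is to compute the first variation of $R(u)$ directly, as in \eqref{eq:EL_min}, treating the integrand as a function $\Phi({\bm s}) = {\bm s}^t S({\bm s}) {\bm s}$ of the gradient variable ${\bm s} = \nabla u$ only. The E-L equation of a functional $\int_\Omega \Phi(\nabla u)\,dx$ is $\div{\nabla_{\bm s}\Phi(\nabla u)} = 0$ with natural boundary condition ${n}\cdot \nabla_{\bm s}\Phi(\nabla u) = 0$. It therefore suffices to show that $\nabla_{\bm s}\Phi({\bm s}) = [2S + S_{\bm s}]{\bm s}$, i.e. $\nabla_{\bm s}\Phi = B\nabla u$ after substituting ${\bm s}=\nabla u$. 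The fidelity term contributes $2(u-u^0)$, which is not written in \eqref{eq:ELexp_simplified_2}. I read the theorem as a statement about the regularizer's contribution, and the argument treats only that term.

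\emph{First variation.} Take $v\in\mathcal{C}^1$ and set $u_\varepsilon = u+\varepsilon v$. Since $u\in\mathcal{C}^2$ and $S$ is assumed differentiable, we may differentiate under the integral:
\[
\frac{d}{d\varepsilon}R(u_\varepsilon)\Big|_{\varepsilon=0} = \int_\Omega \nabla_{\bm s}\Phi(\nabla u)\cdot\nabla v \, dx .
\]
Integrating by parts with the divergence theorem gives
\[
\frac{d}{d\varepsilon}R(u_\varepsilon)\Big|_{\varepsilon=0} = -\int_\Omega \div{\nabla_{\bm s}\Phi(\nabla u)}\, v\,dx + \int_{\partial\Omega} v\,{n}\cdot\nabla_{\bm s}\Phi(\nabla u)\,d\sigma .
\]
First choose $v$ compactly supported; the fundamental lemma of the calculus of variations then yields the interior equation. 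Next let $v$ be arbitrary on $\partial\Omega$; this yields the Neumann-type boundary condition.

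\emph{Gradient of the quadratic form.} Differentiate $\Phi({\bm s}) = \sum_{i,j} s_i S_{ij}({\bm s}) s_j$ with respect to $s_k$ by the product rule:
\[
\partial_{s_k}\Phi = \sum_j S_{kj}s_j + \sum_i s_i S_{ik} + {\bm s}^t S_{s_k}{\bm s}.
\]
Because $S$ is symmetric, the first two sums combine to $2(S{\bm s})_k$. This is exactly where the symmetry hypothesis, the point on which the theorem differs from \cite{diva2:543914}, is used: without it one would get $(S+S^t){\bm s}$. The third term equals $(\nabla^t u\, S_{s_k})\,{\bm s}$, which is the $k$-th row of $S_{\bm s}$ in \eqref{eq:T_s} applied to ${\bm s}$. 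Collecting the components gives $\nabla_{\bm s}\Phi = (2S+S_{\bm s}){\bm s}$, which is $B\nabla u$ once evaluated at ${\bm s}=\nabla u$.

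\emph{Main obstacle.} The delicate step is not the algebra but its justification. Differentiating under the integral and applying the divergence theorem to $B\nabla u$ requires $B\nabla u \in \mathcal{C}^1$. Since $S_{\bm s}$ contains derivatives of $S$, this needs $S\in\mathcal{C}^2$ as a function of ${\bm s}$ together with $u\in\mathcal{C}^2$. For the concrete choice \eqref{eq:S}, \eqref{eq:W}, this fails where $\nabla u=0$, because of the factor $1/|\nabla u|$, and possibly where eigenvalues of GET coincide. I would state the result formally, assuming enough smoothness of $S$ in a neighbourhood of the range of $\nabla u$, exactly as in \cite{diva2:543914}. I would also note that for the GET choice, $S$ in fact depends on higher derivatives of $u$, so the theorem is applied in the formal sense in which $S$ is regarded as a function of ${\bm s}$ alone.
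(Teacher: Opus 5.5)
Your proposal is correct. It is essentially the derivation the paper relies on: the paper gives no proof of its own and simply restates the result from \cite{diva2:543914} with $S$ symmetric, and your argument is exactly that derivation (first variation, product rule on $\nabla^t u S(\nabla u)\nabla u$ with symmetry collapsing $S+S^t$ to $2S$, then integration by parts for the interior equation and the natural boundary condition $n\cdot B\nabla u=0$). Your remarks are accurate caveats rather than gaps. The fidelity term $2(u-u^0)$ is omitted from the statement. The result holds only formally for the GET choice, since there $S$ depends on higher derivatives of $u$ and is singular at $\nabla u=0$.
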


By using Theorem~\ref{thm:R} we compute $S_{\bm{s}}$ as
\begin{align} \label{eq:corr1_Ts}
	S_{\bm s} = 
	-\frac{1}{|\nabla u|^3}
	\begin{pmatrix} 
		u_x \nabla^t u  W \\ 
		u_y \nabla^t u  W
	 \end{pmatrix}	
	+	 
	\frac{1}{|\nabla u|}
	\begin{pmatrix} 
		\nabla^t u  W_{u_x} \\ 
		\nabla^t u  W_{u_y}
	 \end{pmatrix} ,
\end{align}
so that the corresponding minimizer of the regularizer~\eqref{eq:Rfunctional} is obtained by inserting ~\eqref{eq:corr1_Ts} into~\eqref{eq:ELexp_simplified_2} 
\begin{equation}
   \left \{
    \begin{array}{rl}
	   \div{ 
	   \underbrace{
		\left [
			2W  
			+	 
			\begin{pmatrix} 
				\nabla^t u  W_{u_x} \\ 
				\nabla^t u  W_{u_y}
			 \end{pmatrix}
		   -\dfrac{1}{|\nabla u|^2}
			\begin{pmatrix} 
				u_x \nabla^t u  W \\ 
				u_y \nabla^t u  W
			 \end{pmatrix}	
	   \right ] 
	   }_{=Q}
	  	\dfrac{\nabla u}{|\nabla u|}}  &  = 0 \mbox{ in }  \Omega \\[3mm]
      {n} \cdot B\nabla u & =  0     \mbox{ on }  \partial \Omega, 
     \end{array} \right . 
   \label{eq:ELexp_simplified_final}
 \end{equation}
where the bracket, which we denote as $Q$, defines a weight controlling the anisotropy of the total variation scheme. We compute the component-wise derivative of $W$ with respect to $u_x$ and $u_y$ by using an explicit eigendecomposition \ie
\begin{align} \label{eq:D_ux}
\notag	W_{u_x}
	= &
	\left [ 
		\begin{pmatrix} 
			2 v_1   &   v_2    \\
			v_2     &  0
		\end{pmatrix}(\partial_{u_x} v_1) 
		+
		\begin{pmatrix} 
			0   &   v_1    \\
			v_1 &  2v_2
		\end{pmatrix}(\partial_{u_x} v_2) 
	\right ] 
	\lambda_1 + vv^t(\partial_{u_x} \lambda_1) \\
	& +  
	\left [ 
		\begin{pmatrix} 
			2 w_1   &   w_2    \\
			w_2     &  0
		\end{pmatrix}(\partial_{u_x} w_1) 
		+
		\begin{pmatrix} 
			0   &   w_1    \\
			w_1 &  2w_2
		\end{pmatrix}(\partial_{u_x} w_2) 
	\right ] 
	\lambda_2 + ww^t(\partial_{u_x} \lambda_2),
\end{align}
with the corresponding orthonormal eigenvectors $v$ and $w$. The general expressions for the derivatives of the eigenvalues and eigenvectors are given in the supplementary material. 

The most intuitive interpretation of the GETV is to consider the  eigendecomposition of $W(\nabla u)$. Thus given eigenvalues $\lambda_{1,2} = \exp(-|\iota_{1,2}|/k)$ of $W(\nabla u)$ where $\iota_{1,2}$ are eigenvalues computed from the gradient energy tensor, the exponential function will adapt the filtering to be parallel to the image structures \ie close to an image structure $\lambda_1$ will be small and $\lambda_2$ larger. Since the gradient energy tensor does not contain a post-convolution of the tensor-components, our formulation allows us to better preserve fine details in the image structure, than if we would use the structure tensor, as we show in the numerical experiments. 

\subsection{Discretization}

The proposed PDE~\eqref{eq:ELexp_simplified_final} is solved with a forward Euler-scheme and the image derivatives are approximated by using regularized finite differences~\cite{JWeickert_Disc}. The numerical approximation of the divergence operator is based on the expansion
$ \dive (M \nabla u) = \partial_x( M_{11}\partial_x u ) + \partial_x ( M_{12}\partial_y u)
   + \partial_y(M_{21}\partial_x u) + \partial_y(M_{22} \partial_y u).
$
The first and last term in the previous equation are computed by averaging the forward $\partial^+$ and backward $\partial^-$ finite difference operators, and the mixed derivatives are computed with central differences. The final E-L equation that we solve is~\eqref{eq:ELexp_simplified_final} but with regularized derivatives, \ie let $\beta$ denote regularization with a small positive constant such that the denominators are expressed as $|\nabla u|_\beta = \sqrt{|\nabla u|^2 + \beta^2}$. Furthermore, we are required to compute third-order derivatives (in GET) for terms such as $\partial_x \Delta u$, and we found that it is appropriate to directly approximate these higher-order derivatives with central differences of the Laplacian. In practice, to avoid numerical instabilities, it is sufficient to regularize the first and second order derivatives with a Gaussian filter of standard deviation $\sigma$ of $8/10$. To compute the third-order term a Gaussian filter of standard deviation $3$ was suitable for regularization. These filter sizes were kept constant for all images and all noise levels in the experimental evaluation, we fixed $\beta = 10^{-4}$. 

\begin{figure}[tb]
    \centering
	\begin{minipage}{0.49\textwidth}
		\centering
		\includegraphics[width=0.49\textwidth]{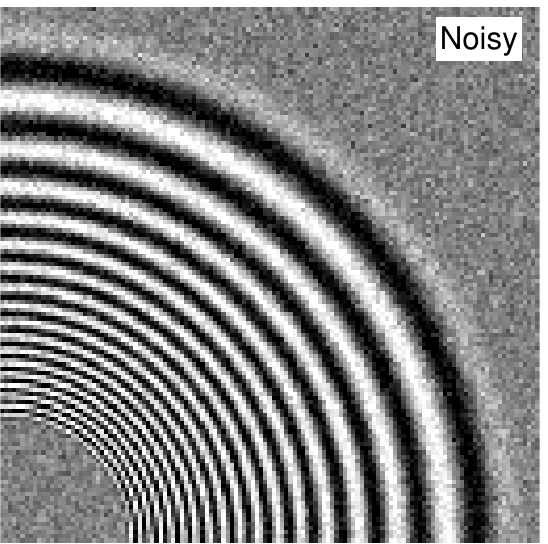}
		\includegraphics[width=0.49\textwidth]{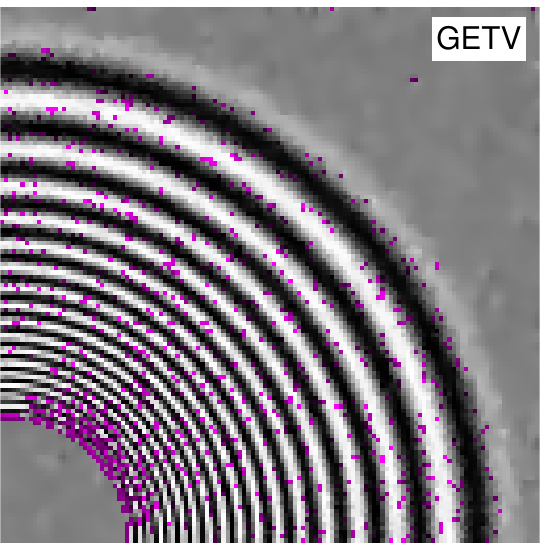} \\[1mm]
		\includegraphics[width=0.49\textwidth]{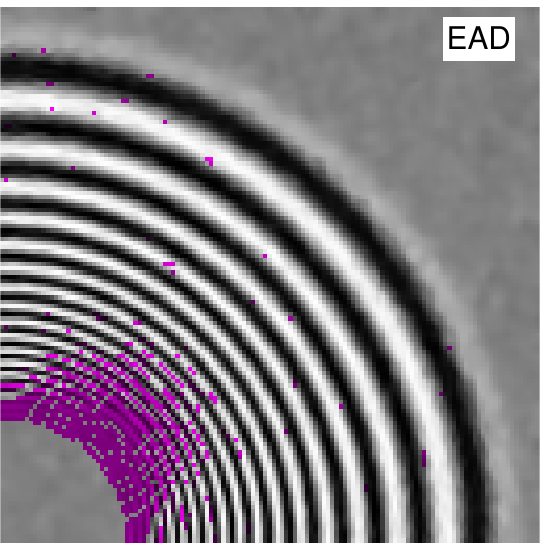} 
		\includegraphics[width=0.49\textwidth]{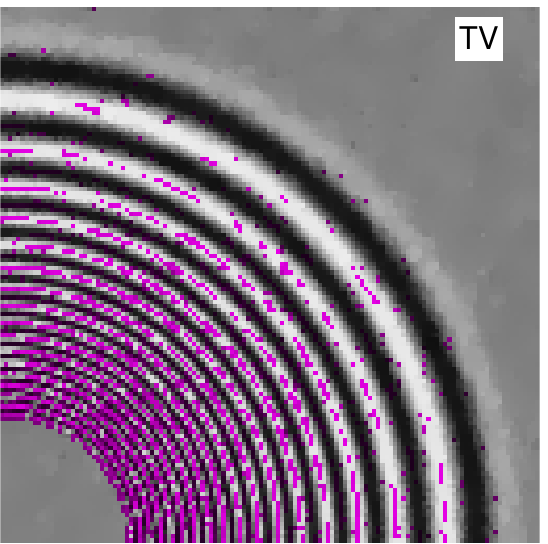} 
	\end{minipage}
	\begin{minipage}{0.49\textwidth}
		\includegraphics[width=0.95\textwidth]{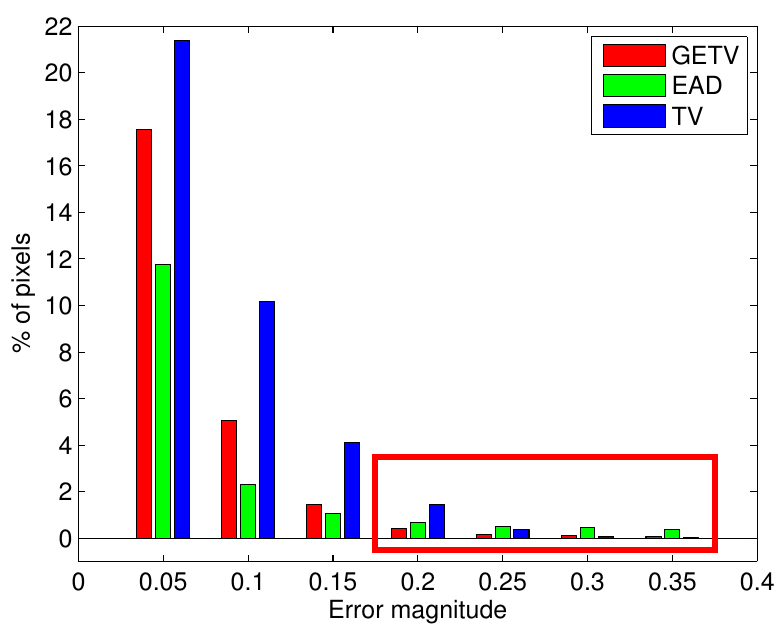} 
	\end{minipage}
    \caption{A test image corrupted with $20$ standard deviation additive Gaussian noise and the corresponding denoised results. Observe that the errors larger than $10\%$ (magenta) are considerably less concentrated at high frequencies for the GETV than the other methods.}
    \label{fig:illustration_TV_AD}
\end{figure}

\section{Application to image enhancement}
\label{sec:experiments}

We evaluate our approach with respect to extended anisotropic diffusion (EAD)~\cite{diva2:543914} and a state-of-the-art primal-dual implementation of the Rudin-Osher-Fatemi~\cite{Rudin1992259} total variation model (TV)~\cite{Chambolle_PD_TV2001}\footnote{Code (14-02-17) \url{gpu4vision.icg.tugraz.at/index.php?content=downloads.php}}. In figure~\ref{fig:illustration_TV_AD} we illustrate the behaviour of the three schemes on a radial test pattern consisting of increasingly high-frequency components. The histogram illustrates that the proposed gradient energy total variation (GETV) scheme in essence exhibits fewer large magnitude errors than the other methods, this is marked by the red box. The EAD scheme shows errors in high-frequency areas as illustrated with the magenta colour, whereas standard total variation gives errors for all frequencies due to the tendency of enforcing piece-wise constant surfaces.  

\renewcommand{\figwidth}{0.24}
\begin{figure}[t]
    \centering
	\includegraphics[width=0.99\textwidth]{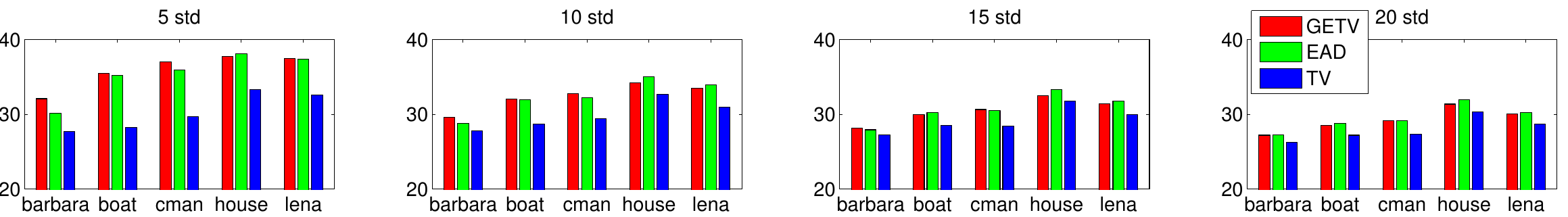} \\
	\textbf{(a)}  PSNR values for the grey-scale images.   \\[4mm]
	\includegraphics[width=\figwidth\textwidth]{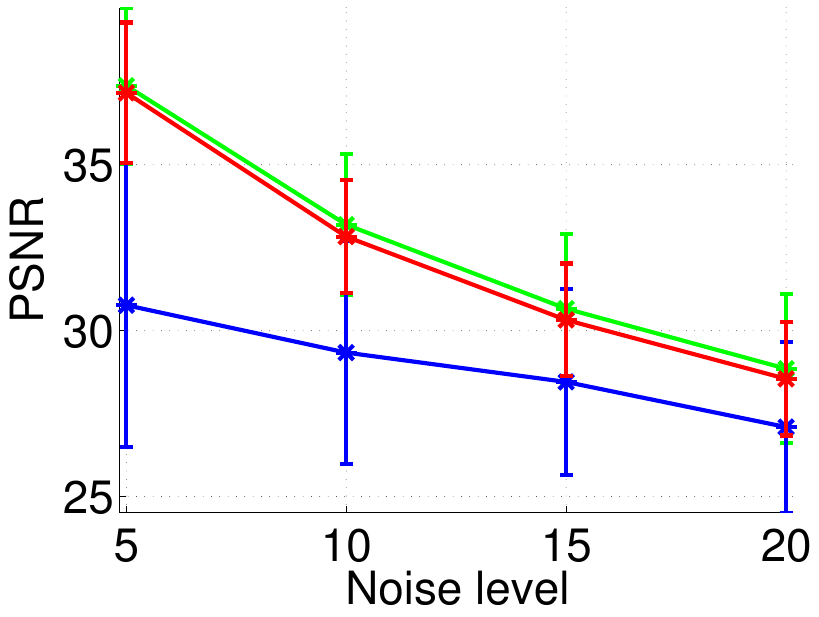}
	\includegraphics[width=\figwidth\textwidth]{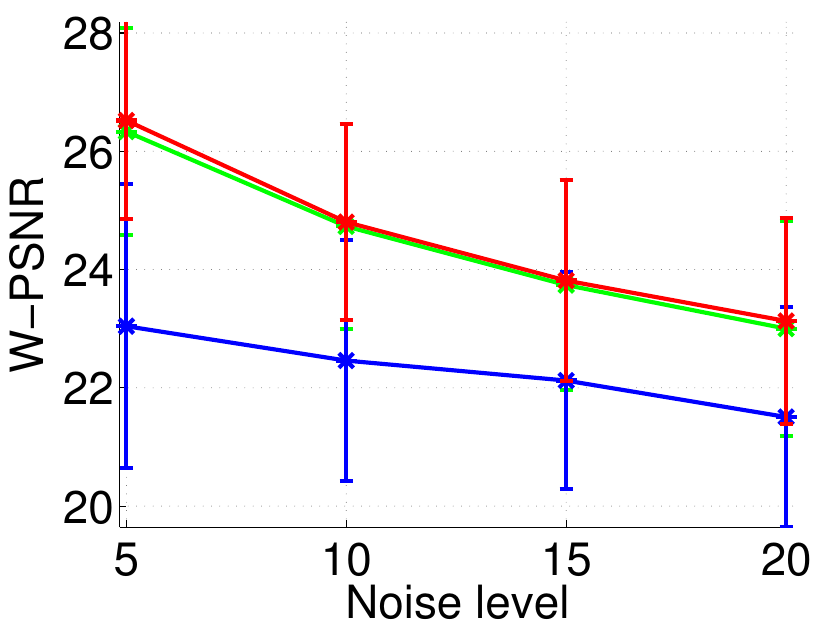} 
	\includegraphics[width=\figwidth\textwidth]{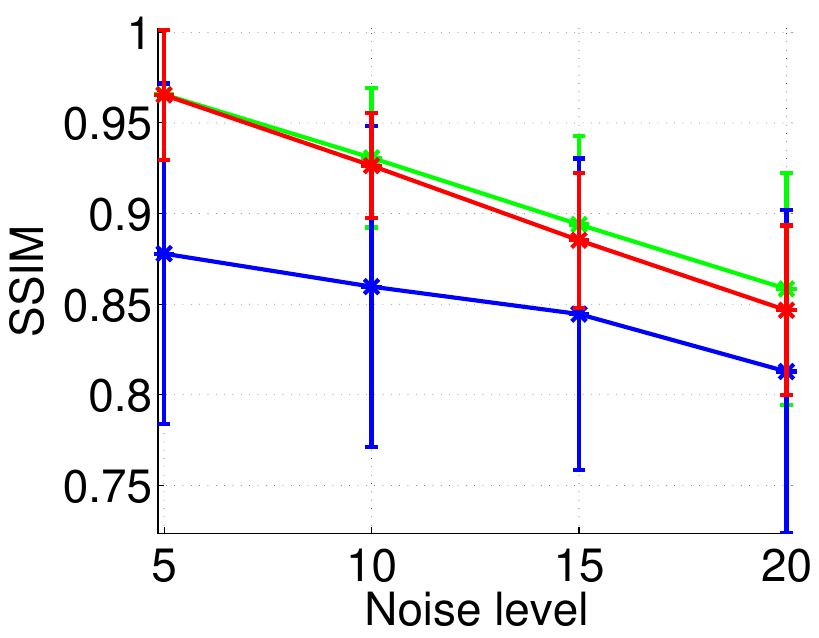}
	\includegraphics[width=\figwidth\textwidth]{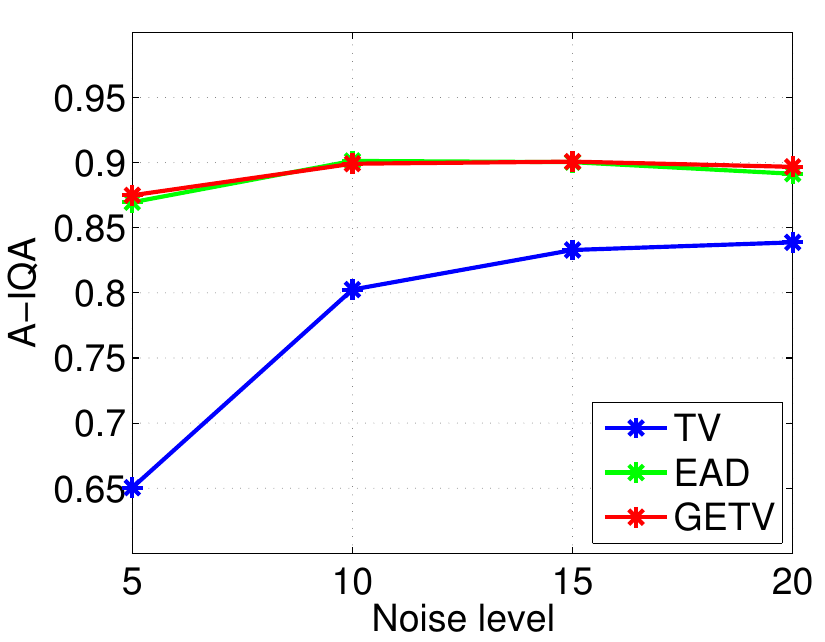} 
	\textbf{(b)} Mean and standard deviation of error measures for the Berkeley dataset.
    \caption{Error measures, a higher W-PSNR indicates better recovery of high-frequency regions. The visual appearance for selected images is shown in figure~\ref{fig:result_berkeley}. }
    \label{fig:error_measures}
\end{figure}

\renewcommand{\figtablewidth}{2.9cm}
\renewcommand{\figwidth}{0.24}
\begin{figure}[t]
    \centering
	\begin{tabular}{m{\figtablewidth} m{\figtablewidth} m{\figtablewidth} m{\figtablewidth}}
	 \footnotesize
		 Noisy & EAD & GETV & TV \\
		 \hline
	 \end{tabular} \\
	\includegraphics[width=\figwidth\textwidth]{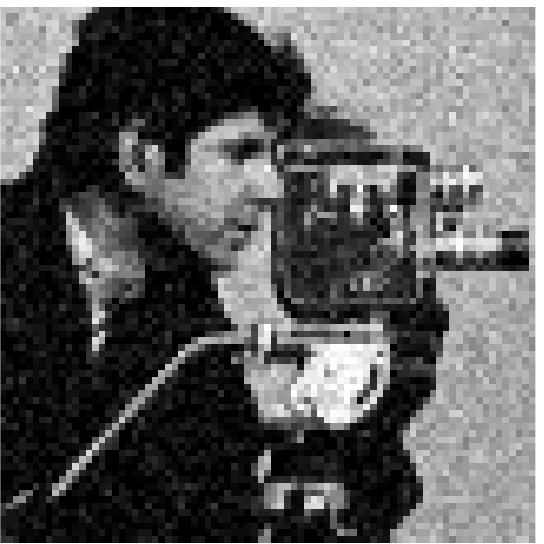}
	\includegraphics[width=\figwidth\textwidth]{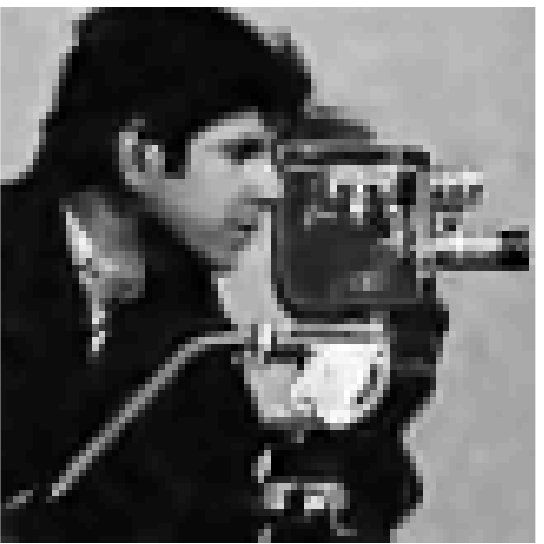}
	\includegraphics[width=\figwidth\textwidth]{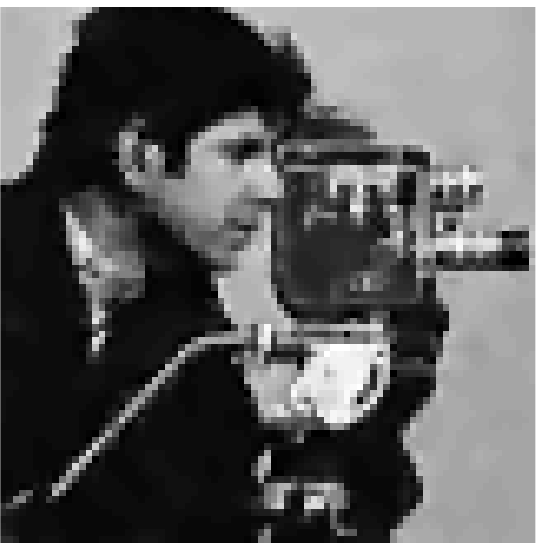} 
	\includegraphics[width=\figwidth\textwidth]{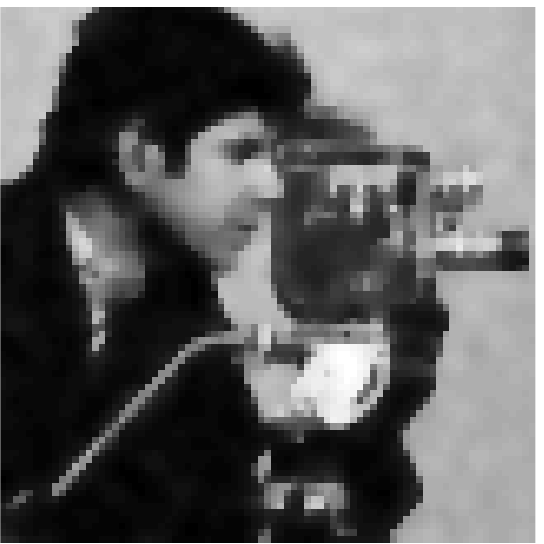} \\
	\begin{tabular}{m{\figtablewidth} m{\figtablewidth} m{\figtablewidth} m{\figtablewidth}}
		\small
		\input{cameraman_std15error_table.tex} \\
		\hline
	\end{tabular} \\
	\caption{Example from the grey-scale dataset for 15 standard deviation of noise. GETV shows an improvement over EAD and TV, both in PSNR and preservation of fine image structures such as the camera handle. Also, note that the images obtained with GETV appear less blurry than EAD and TV.}
    \label{fig:result_grayscale}
\end{figure}

\renewcommand{\figtablewidth}{2.9cm}
\renewcommand{\figwidth}{0.24}
\begin{figure}[t]
    \centering
	\begin{tabular}{m{\figtablewidth} m{\figtablewidth} m{\figtablewidth} m{\figtablewidth}}
		Noisy & EAD & GETV & TV \\
		\hline
	\end{tabular} \\
	\includegraphics[width=\figwidth\textwidth]{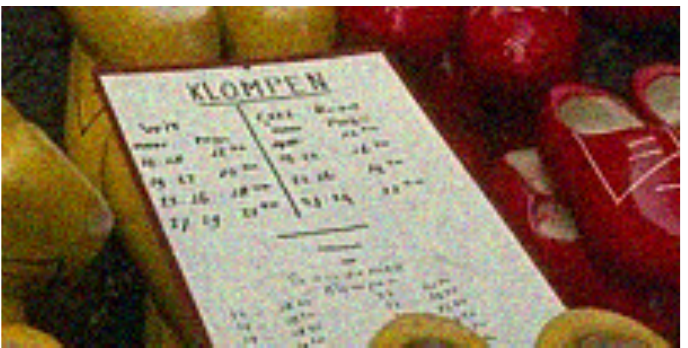}
	\includegraphics[width=\figwidth\textwidth]{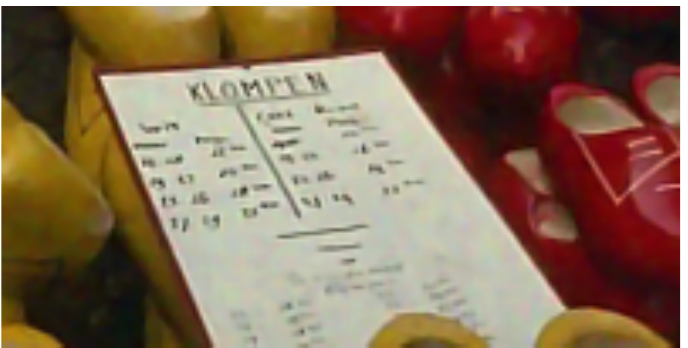}
	\includegraphics[width=\figwidth\textwidth]{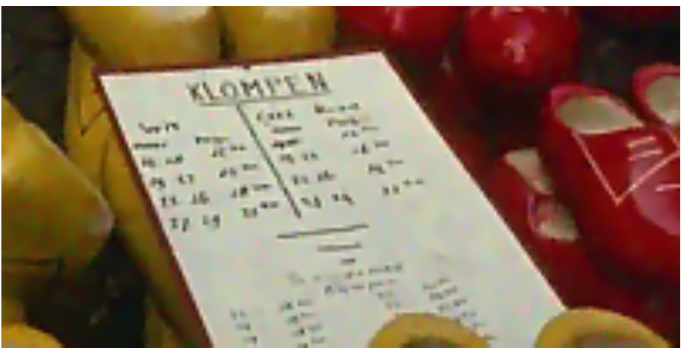} 
	\includegraphics[width=\figwidth\textwidth]{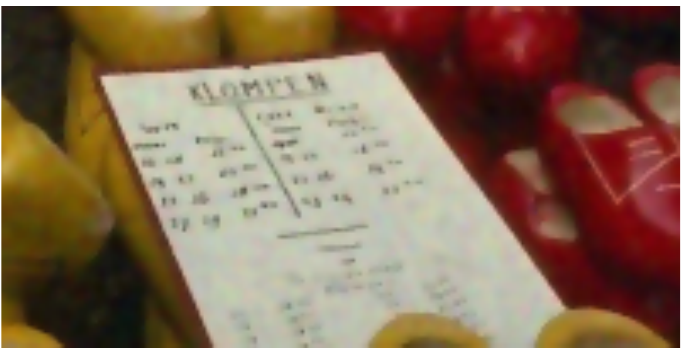} \\
	\begin{tabular}{m{\figtablewidth} m{\figtablewidth} m{\figtablewidth} m{\figtablewidth}}
		\input{140075_std15error_table.tex} \\
		\hline
	\end{tabular}\\
	\includegraphics[width=\figwidth\textwidth]{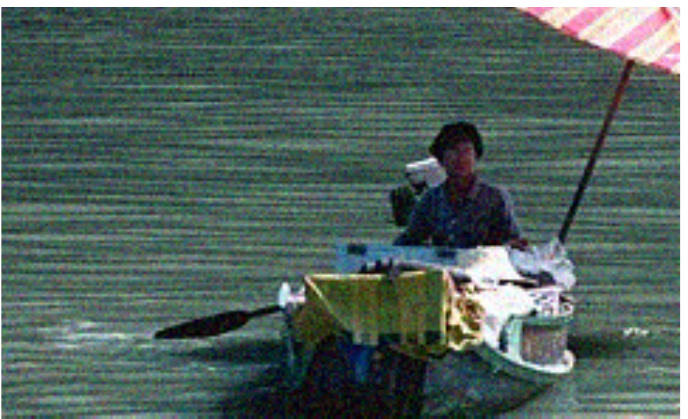}
	\includegraphics[width=\figwidth\textwidth]{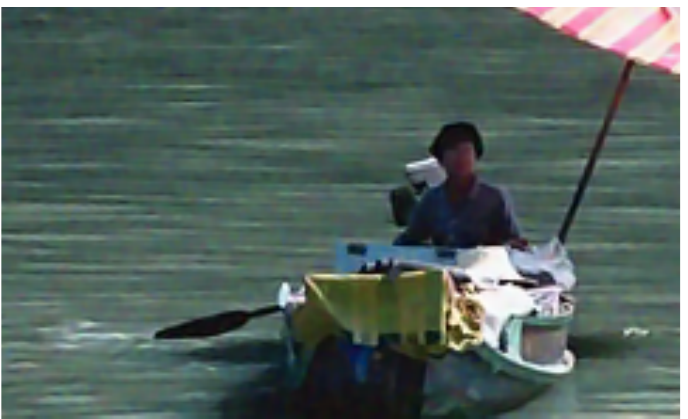}
	\includegraphics[width=\figwidth\textwidth]{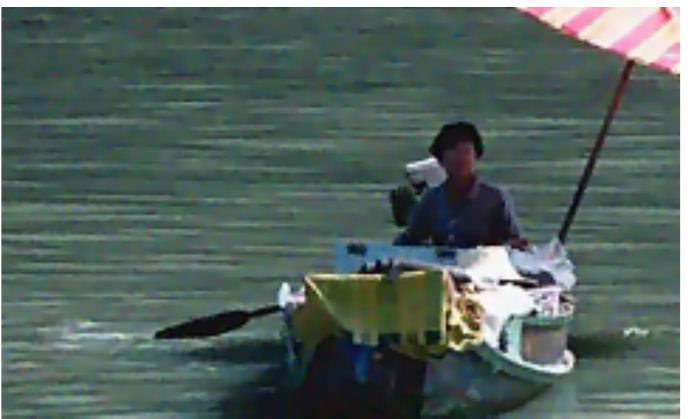} 
	\includegraphics[width=\figwidth\textwidth]{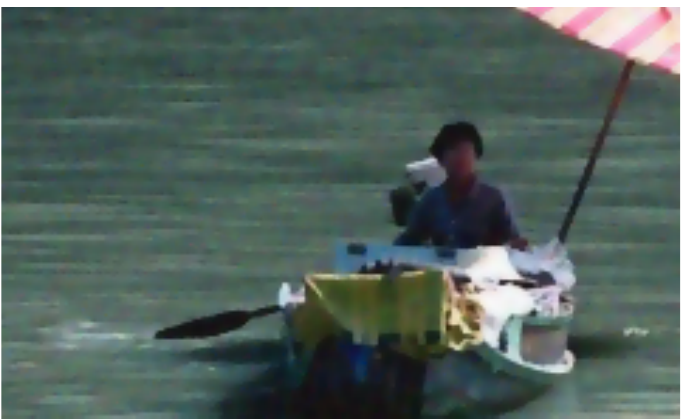} \\
	\begin{tabular}{m{\figtablewidth} m{\figtablewidth} m{\figtablewidth} m{\figtablewidth}}
		\input{147021_std15error_table.tex} \\
		\hline
	\end{tabular} \\
	\includegraphics[width=\figwidth\textwidth]{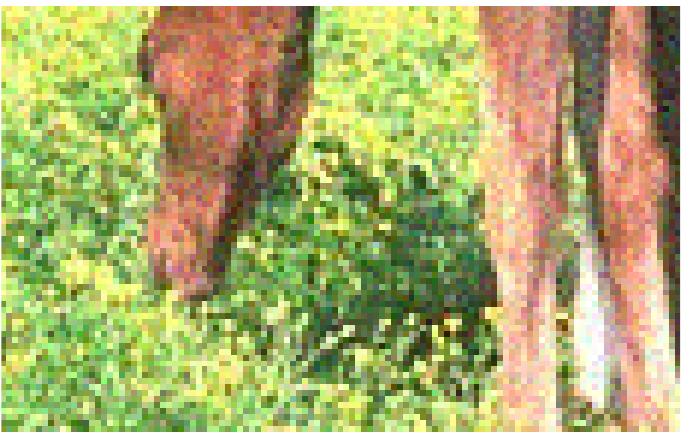}
	\includegraphics[width=\figwidth\textwidth]{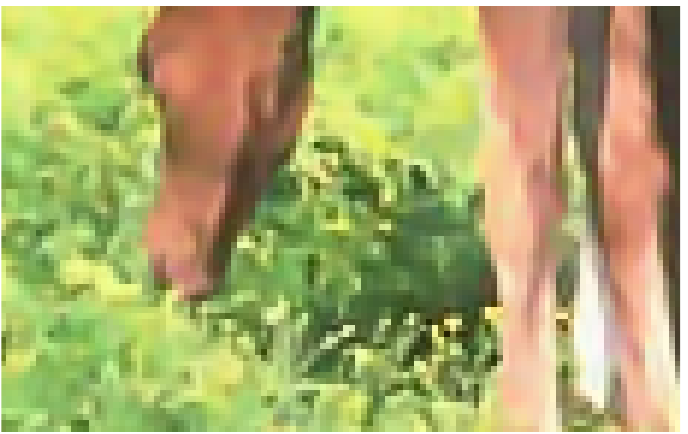}
	\includegraphics[width=\figwidth\textwidth]{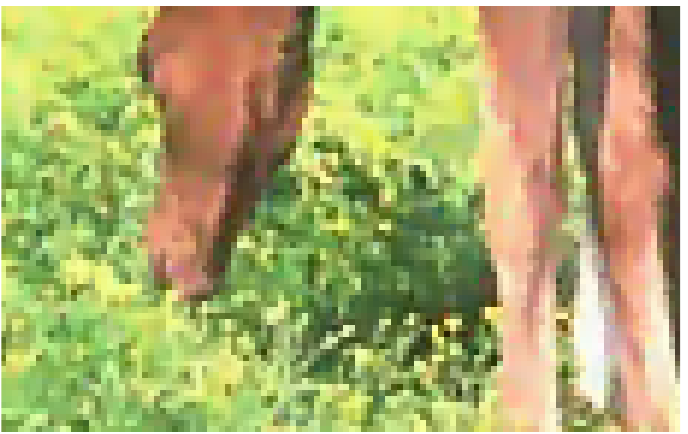} 
	\includegraphics[width=\figwidth\textwidth]{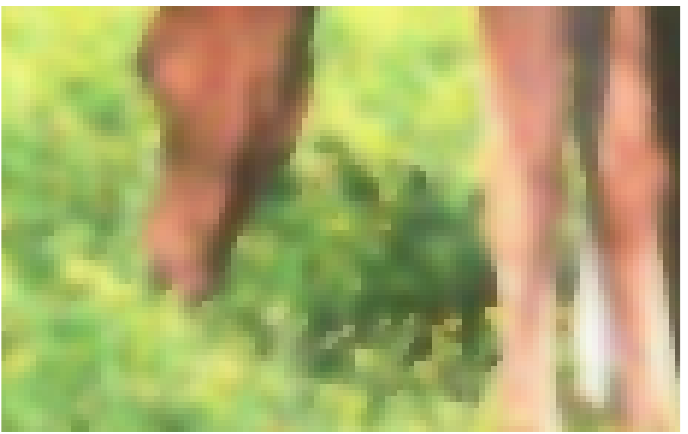} \\
	\begin{tabular}{m{3cm} m{3cm} m{3cm} m{3cm}}
		\input{113044_std15error_table.tex} \\
		\hline
	\end{tabular} \\
   \caption{Results from the Berkeley colour-image dataset with 15 standard deviation of noise where GETV excels. Consider particularly the text on the document and the grass behind the horse on the last row. Note that GETV in general preserves more fine details than EAD and TV, which both tends to oversmooth the images.}
    \label{fig:result_berkeley}
\end{figure}

\subsection{Experiments' setup}
\textbf{Datasets} that we consider are twofold. First we consider a number of standard grey-scale images \emph{barbara, boat, cameraman (cman), house} and \emph{lena}, each image is of size $256 \times 256$.  The other dataset is the Berkeley database~\cite{MartinFTM01}, where we randomly choose 50 colour images each of size $481 \times 321$. In the evaluation we corrupted each image with Gaussian additive noise of standard deviations 5, 10, 15 and 20. The images that we have used are listed in the supplementary material. In this work we use the decorrelation CIELAB transform, however other choices of colour spaces are possible as investigated in~\cite{diva2:543914}.

\textbf{Error measures} are in the image processing community recognised to not correlate with perceived image quality, therefore we investigate several error measures and consider the visual image quality. PSNR is widely used in the denoising literature so we report it, as well as, the structure similarity index (SSIM)~\cite{Wang2004} known to better reflect the true image quality. Also, since large homogeneous regions have more impact on the error measures than edges in the image do, we compute a weighted PSNR, W-PSNR, to assess preservation of edges in the images after filtering. The weight we use is given by the trace of the structure tensor and it is applied on the difference between the original and the enhanced image in the computation of the PSNR. Since the trace measures the magnitude of the gradient, the W-PSNR value correlates with a better preservation of edges than the PSNR measure does in relation to the noise-free image.

\textbf{Image auto-denoising} is used to optimize the selection of parameters in the different filtering schemes. It is a method which does not take the noise-free image into account when determining a quality measure. In this work we use the image auto-denoising metric proposed in~\cite{iccv-13-xiangfei-kong}, which we denote as A-IQA (auto-image quality assessment). The basic idea of A-IQA is that a high correlation score is obtained if the denoised image has smooth surfaces, but yet preserves boundaries. In the total variation scheme, we select the regularization parameter $\lambda$ from the values 6, 8, 10, 12 and 14 based on maximum A-IQA. The control parameter $k$ in the EAD scheme is computed according to $k = (\exp(1) - 1)/(\exp(1) - 2)\sigma^2$~\cite{Fberg2011} where $\sigma$ is the standard deviation of the added noise. The $k$ obtained for EAD is also used in the proposed GETV scheme but scaled with a factor $10^{-1}$. The stopping time for all methods was determined by the maximum A-IQA value.

\subsection{Result of image denoising}
In figure~\ref{fig:error_measures} (a) we show the PSNR values that we have obtained for each grey-scale image and noise level. We observe that the standard TV formulation does not perform well compared to EAD and GETV in these cases. In figure~\ref{fig:result_grayscale} we show close-ups of \emph{cameraman}. We note that in all cases the error measures are similar for the A-IQA values, however considering the visual quality it is obvious that more details are preserved in GETV, \ie the presence of sharp edges in the \emph{cameraman} image such as the handle of the camera. 

With respect to the colour images, figure~\ref{fig:result_berkeley} shows examples from the Berkeley dataset and the corresponding error measures are given in figure~\ref{fig:error_measures} (b). By comparing EAD and GETV for lower noise levels (5-15 standard deviations) we see that the difference in PSNR and SSIM is at best marginal. However, considering the variance, GETV is more robust than EAD. In figure~\ref{fig:result_berkeley} the visual differences can be seen for some selected images. Note that it is primarily in the high-frequency regions that GETV excels, consider \eg the clarity of the document, the visibility of waves and details in the grass in the horse image. For both gray-scale and colour images, EAD tends to oversmooth the images. Furthermore, it is obvious that the TV-method fails to handle these images when auto-tuning is used. By manually tweaking the regularization parameter of the methods we can improve the error measures for some images, however this approach is infeasible for a large amount of images. 

\section{Conclusion}
In this work we have presented a novel variational approach to tensor-based total variation. In particular, we have proposed a \emph{gradient energy total variation} functional which uses the gradient energy tensor. Our results suggest that the GETV formulation is suitable for images containing high-frequency information such as fine structures. Secondly, we showed by using the error measure A-IQA that the diffusion formulation performs well in denoising applications compared to EAD and TV.

\vspace{1mm}
\noindent {\bf Acknowledgement}. This research has received funding from the Swedish Foundation for Strategic Research through the grant VPS and from Swedish Research Council through grants for the projects energy models for computational cameras (EMC$^2$), Visualization-adaptive Iterative Denoising of Images (VIDI) and Extended Target Tracking (ETT), all within the Linnaeus environment CADICS and the excellence network ELLIIT.

\bibliographystyle{splncs03}
\bibliography{egbib}

\end{document}